\documentclass[sigconf]{} 

\AtBeginDocument{%
  \providecommand\BibTeX{{%
    \normalfont B\kern-0.5em{\scshape i\kern-0.25em b}\kern-0.8em\TeX}}}

\usepackage{latexsym}
\usepackage{amsmath}
\usepackage{url}
\usepackage{amsmath,nccmath}
\usepackage{algorithm}
\usepackage{algorithmic}
\usepackage{xcolor}
\usepackage{graphicx}
\usepackage{subcaption}
\usepackage{balance}

\definecolor{lml}{RGB}{250, 0, 0}

\usepackage{bbm}

\newtheorem{theorem}{Theorem}
\newtheorem{lemma}{Lemma}
\theoremstyle{definition}
\newtheorem{definition}{Definition}

\usepackage{enumitem}

\DeclareMathOperator*{\argmin}{arg\,min}

\usepackage{array}
\newcolumntype{P}[1]{>{\centering\arraybackslash}p{#1}}

\newcommand{{\method}}{FairRF}
\usepackage{multirow}
\fancyhead{}


\copyrightyear{2022}
\acmYear{2022}
\setcopyright{acmcopyright}\acmConference[WSDM '22]{Proceedings of the Fifteenth ACM International Conference on Web Search and Data Mining}{February 21--25, 2022}{Tempe, AZ, USA}
\acmBooktitle{Proceedings of the Fifteenth ACM International Conference on Web Search and Data Mining (WSDM '22), February 21--25, 2022, Tempe, AZ, USA}
\acmPrice{15.00}
\acmDOI{10.1145/3488560.3498493}
\acmISBN{978-1-4503-9132-0/22/02}

\title{
Towards Fair Classifiers Without Sensitive Attributes: Exploring Biases in Related Features
}


\author{Tianxiang Zhao{\textsuperscript{\textdagger}}, Enyan Dai{\textsuperscript{\textdagger}}, Kai Shu{$^{\ddagger}$}, Suhang Wang{\textsuperscript{\textdagger}} \\}

 \affiliation{{\textsuperscript{\textdagger}}College of Information Sciences and Technology, The Pennsylvania State University, USA\\
{$^{\ddagger}$}Department of Computer Science, College of Computing, Illinois Institute of Technology
 \country{USA}
 }

\email{{tkz5084, emd5759, szw494}@psu.edu, kshu@iit.edu}

\begin{document}
\fancyhead{}

\begin{abstract}
Despite the rapid development and great success of machine learning models, extensive studies have exposed their disadvantage of inheriting latent discrimination and societal bias from the training data. This phenomenon hinders their adoption on high-stake applications. Thus, many efforts have been taken for developing fair machine learning models. Most of them require that sensitive attributes are available during training to learn fair models. However, in many real-world applications, it is usually infeasible to obtain the sensitive attributes due to privacy or legal issues, which challenges existing fair-ensuring strategies. Though the sensitive attribute of each data sample is unknown, we observe that there are usually some non-sensitive features in the training data that are highly correlated with sensitive attributes, which can be used to alleviate the bias. Therefore, in this paper, we study a novel problem of exploring features that are highly correlated with sensitive attributes for learning fair and accurate classifiers. We theoretically show that by minimizing the correlation between these related features and model prediction, we can learn a fair classifier. Based on this motivation, we propose a novel framework which simultaneously uses these related features for accurate prediction and enforces fairness. In addition, the model can dynamically adjust the regularization weight of each related feature to balance its contribution on model classification and fairness. Experimental results on real-world datasets demonstrate the effectiveness of the proposed model for learning fair models with high classification accuracy.

\end{abstract}

\begin{CCSXML}
<ccs2012>
   <concept>
       <concept_id>10010147.10010257.10010321.10010337</concept_id>
       <concept_desc>Computing methodologies~Regularization</concept_desc>
       <concept_significance>300</concept_significance>
       </concept>
   <concept>
       <concept_id>10010147.10010257.10010293.10010294</concept_id>
       <concept_desc>Computing methodologies~Neural networks</concept_desc>
       <concept_significance>100</concept_significance>
       </concept>
   <concept>
       <concept_id>10010147.10010257</concept_id>
       <concept_desc>Computing methodologies~Machine learning</concept_desc>
       <concept_significance>300</concept_significance>
       </concept>
 </ccs2012>
\end{CCSXML}

\ccsdesc[300]{Computing methodologies~Regularization}
\ccsdesc[100]{Computing methodologies~Neural networks}
\ccsdesc[300]{Computing methodologies~Machine learning}

\keywords{Fairness; Social mining; Data learning}

\maketitle

\section{Introduction}
With the great improvement in performance, modern machine learning models are becoming increasingly popular and are widely used in decision-making systems such as medical diagnosis~\cite{bakator2018deep} and credit scoring~\cite{dastile2020statistical}.  
Despite their great successes, extensive studies~\cite{gianfrancesco2018potential,mehrabi2019survey,yapo2018ethical} have revealed that training data may include patterns of previous discrimination and societal bias. Machine learning models trained on such data can inherit the bias on sensitive attributes such as ages, genders, skin color, and regions~\cite{beutel2017data,dwork2012fairness,hardt2016equality}. 
For example, a study found strong unfairness exists in a \textit{Criminal Prediction} system used to assess a criminal defendant’s likelihood of becoming a recidivist~\cite{Julia2016machine}. The system shows a strong bias towards people with color, tending to predict them as recidivist even when they are not. Thus, hidden biases in a machine learning model could cause severe fairness problems, which raises concerns on their real-world applications, especially in high-stake scenarios.

Various efforts~\cite{feldman2015certifying,kamiran2009classifying,sattigeri2019fairness,zafar2015fairness} have been taken to address the fairness issue of current machine learning models. 
For example, ~\cite{kamiran2012data,feldman2015certifying} pre-process the data to remove discrimination in training.
~\cite{dwork2012fairness,zafar2015fairness} design special regularization terms to ensure that the prediction output is insensitive w.r.t sensitive attributes. And ~\cite{hardt2016equality,pleiss2017fairness} post-process prediction results on instances of unfair classes. Despite their superior performance, all the aforementioned approaches require that sensitive attributes are available for removing bias. However, for many real-world applications, it is difficult to obtain sensitive attributes of each data sample due to various reasons such as privacy and legal issues, or difficulties in data collection~\cite{coston2019fair,lahoti2020fairness}. 

Tackling fairness issue without sensitive attributes available is challenging as we lack supervision to preprocess the training data, regularize the model or post-process the predictions. There are only very few initial efforts on learning fair classifiers without sensitive attributes~\cite{lahoti2020fairness,yan2020fair,coston2019fair}. \citeauthor{yan2020fair}~\cite{yan2020fair} use a clustering algorithm to form pseudo groups to approximate real protected groups. \citeauthor{lahoti2020fairness}~\cite{lahoti2020fairness} propose to use an auxiliary module to find computationally-identifiable regions where model under-performs, and optimize this worst-case performance. However, these works are often found to be ineffective in achieving fairness with demographics~\cite{lahoti2020fairness}. In addition, the groups or regions found by these approaches may not be related to the sensitive attribute we want to be fair with. For example, we might want the model to be fair on \textit{gender}; while the clustering algorithm gives groups of \textit{race}. 
Thus, more efforts need to be taken to address the important and challenging problem of learning fair models without sensitive attributes.

Though the sensitive attribute of each data sample is unknown, we observe that \textit{there are usually some non-sensitive features in the training data that are highly correlated with sensitive attributes, which can be used to alleviate the bias}.
Previous works~\cite{Julia2016machine,coston2019fair} observed that unfairness persists even when sensitive attributes are not used as input, which indicate that biases are embedded in some non-sensitive features used for training models. These non-sensitive features are highly correlated with sensitive attributes, which makes the model biased. We call such  features as \textit{Related Features}.
These correlations arise from various reasons, such as biases in data collection, or interplay of an underlying physiological difference with socially determined role perception~\cite{celentano1990gender}. 
For example, \citeauthor{vogel2016toward}~\cite{vogel2016toward} find that there exist striking differences in age distributions across racial/ethnic groups in US prisons. The Hispanic and black populations have a larger portion of individuals at younger ages, hence age is correlated with race in this field. In practice, common sense and prior domain knowledge can help to identify the related features given that we want to have a fair model on certain sensitive attributes. In addition, for different sensitive attribute such as \textit{race} or \textit{gender}, we can specify different sets of related features. With these related features identified, we would be able to alleviate the fairness issue. One straightforward way is to discard related features for training a fair model. However, it will also discard important information for classification. Thus, though promising, it remains an open question of how to effectively utilize related features to learn fair models with high classification accuracy.



Therefore, in this paper, we study a novel problem of exploring related features for learning fair and accurate classifiers without sensitive attributes. In essence, we are faced with three challenges: (i) how to utilize these related features to achieve fairness; (ii) how to achieve an optimal trade-off between accuracy and fairness; (iii) when given related feature sets contains misidentified features or are incomplete, how to adjust the usage of them. In an attempt to solve these challenges, we propose a novel framework \underline{F}airness with \underline{R}elated \underline{F}eatures ({\method}). Instead of simply discarding related features, the basic idea of {\method} is to use the related features as both features for training the classifier and as pseudo sensitive attributes to regularize the behavior of it, which help to learn fair and accurate classifiers. We theoretically show that regularizing the model using related features can achieve fairness on sensitive attribute. 
Furthermore, to balance the classification accuracy and model fairness, and cope with the case when identified related attributes are inaccurate and noisy, {\method}  can automatically learn the importance weight of each related feature for regularization in the model.  
The main contributions of the paper are as follows:
\begin{itemize}
    \item We study a novel problem of exploring  related features to learn fair classifiers without sensitive attributes;
    \item We theoretically show that by adopting related features to regularize the model, we can learn fairer classifier; 
    \item We propose a novel framework {\method} which can simultaneously utilize the related features to learn fair classifiers and adjust the importance weights of each related feature; and
    \item We conduct extensive experiments on real-world datasets to demonstrate the effectiveness of the proposed method for fair classifiers with high classification accuracy.
\end{itemize}


\section{Related Work} \label{sec:related_work}
To address the concerns of fairness in machine learning models, a number of fairness approaches are proposed. They can be generally split into three categories: (i) individual fairness~\cite{dwork2012fairness,zemel2013learning,kang2020inform,lahoti2019operationalizing}, which requires the model to give similar prediction to similar individuals; (ii) group fairness~\cite{dwork2012fairness,hardt2016equality,zhang2017achieving}, which aims to treat the groups with different protected sensitive attributes equally; (iii) Max-Min fairness~\cite{lahoti2020fairness,hashimoto2018fairness,zhang2014fairness}, which tries to maximize the minimum expected utility across groups. We focus on group fairness in this work.

Extensive works have been conducted to for group fairness-aware machine learning~\cite{zhang2017achieving,beutel2017data,locatello2019fairness,dwork2012fairness,hardt2016equality,zemel2013learning,lahoti2020fairness}. Based on the stage of applying fairness in training, these algorithms can be generally split into three categories: pre-processing approaches~\cite{zhang2017achieving,kamiran2012data,xu2018fairgan}, in-processing approaches~\cite{zafar2015fairness,zhang2018mitigating}, and post-processing approaches~\cite{hardt2016equality,pleiss2017fairness}. 
Pre-processing approaches modify the training data to reduce the historical discrimination in the dataset. For instance, the bias could be eliminated by correcting labels \cite{zhang2017achieving,kamiran2009classifying}, revising attributes \cite{kamiran2012data,feldman2015certifying}, generating non-discriminatory data \cite{xu2018fairgan,sattigeri2019fairness}, and obtaining fair representations \cite{beutel2017data,locatello2019fairness,edwards2015censoring,zemel2013learning,louizos2015variational,creager2019flexibly}. In-processing approaches revise the training of the state-of-the-art models to achieve fairness. 
More specifically, they apply fairness constraints or design a objective function considering the fairness of predictions~\cite{dwork2012fairness,zafar2015fairness,zhang2018mitigating}. Finally, the post-processing approaches directly change the predictive labels of trained models to obtain fair predictions~\cite{hardt2016equality,pleiss2017fairness}. 

Despite their ability in alleviating the bias issues, aforementioned methods generally require the sensitive attributes of each data sample available to achieve fairness; while for many real-world applications,  it is difficult to collect sensitive attributes of subjects due to various reasons such as privacy issues, legal problems and regulatory restrictions. The lacking of sensitive attributes of training data challenges the aforementioned methods~\cite{beutel2017data}. Investigating fair models without sensitive attributes is important and challenging, and it is still in its early stage. There are only a few works on this direction~\cite{lahoti2020fairness,hashimoto2018fairness,yan2020fair}. One branch of approaches~\cite{lahoti2020fairness,hashimoto2018fairness} investigates fairness without demographics via solving a Max-Min problem. For instance, \citeauthor{lahoti2020fairness}~\cite{lahoti2020fairness} proposes adversarial reweighted learning that leverages the notion of computationally-identifiable errors to achieve Rawlsian Max-Min fairness without sensitive attributes. However, these methods are only effective for achieving Max-Min fairness. 
The other branch~\cite{dai2020fairgnn,yan2020fair} addresses this missing sensitive attribute scenario via providing pseudo group splits. 
For instance, \citeauthor{yan2020fair}~\cite{yan2020fair} pre-processes the data via clustering and uses obtained groups as the proxy. However, the conformity between obtained groups from these approaches and real protected groups are highly dependent on data distribution.

The proposed {\method} is inherently different from the aforementioned approaches: (i) We study a novel problem of exploring features that are highly related to the unseen sensitive ones for learning fair and accurate classifiers. Obtaining these features requires just a little prior domain knowledge, and it prevents the difficulty and instability of previous approaches in detecting protected groups~\cite{lahoti2020fairness,yan2020fair}; and (ii) We theoretically show that by regularizing the model prediction with the related features that are highly corrected with sensitive attributes, we can learn a fair model w.r.t the sensitive attribute. In addition, our experimental results show that the given related feature set can be incomplete or noisy.


\section{Problem Definition} \label{sec:problem_definition}
Throughout this paper, matrices are written as boldface capital letters and vectors are denoted as boldface lowercase letters. For an arbitrary matrix $\mathbf{M} \in \mathbb{R}^{n \times m}$, $M_{ij}$ denotes the $(i, j)$-th entry of $\mathbf{M}$ while $\mathbf{m}_i$ and $\mathbf{m}^j$ mean the $i$-th row and $j$-th column of $\mathbf{M}$, respectively. Capital letters in calligraphic math font such as $\mathcal{P}$ are used to denote sets or cost function.

Let $\mathbf{X} \in \mathbb{R}^{n \times m}$ be the data matrix with each row $\mathbf{x}_i \in \mathbb{R}^{1 \times m}$ as an $m$-dimensional data instance. We use $\mathcal{F}=\{f_1,\dots,f_m\}$ to
denote the $m$ features and $\mathbf{x}^1, \dots,\mathbf{x}^{m}$ are the corresponding feature vectors, where $\mathbf{x}^{j}$ is the $j$-th column of $\mathbf{X}$.  Let $\mathbf{y} \in \mathbb{R}^{n}$ be the label vector, where the $i$-th element of $\mathbf{y}$, i.e., $y_i$, is the label of $\mathbf{x}_i$. Following existing work on fair machine learning models~\cite{lahoti2020fairness}, we focus on binary classification problem, i.e., $y_i \in \{0,1\}$. Given $\mathbf{X}$ and $\mathbf{y}$, we aim to train a fair classifier with good classification performance. 

Extensive studies~\cite{Julia2016machine,lahoti2020fairness} have revealed that historical data may include previous discrimination and societal bias on sensitive attribute $S$ such as ages, genders, skin color, and regions. Though sensitive attributes $S$ are not used as features, i.e., $S \not\in \mathcal{F}$, a subset of none-sensitive features $\mathcal{F}_s \in \mathcal{F}$ are highly correlated with sensitive attributes, making machine learning models trained on such data inherit the bias. For example, in dataset containing US criminal records~\cite{Julia2016machine}, racial information is taken as sensitive. Although it is unseen, trained model could still be unfair as distribution of racial groups population may be leaked from the distribution of ages~\cite{vogel2016toward}.

In many real-world applications, sensitive attributes of data samples are unavailable due to various reasons such as difficulty in data collection, security or privacy issues. It challenges existing fair machine leaning approaches that require sensitive attributes of data samples for fair models. Though sensitive attribute of each data sample is unknown, since the bias is caused by the subset of features $\mathcal{F}_S$ that are highly correlated with $S$, $\mathcal{F}_S$ can provide alternative supervision to learn fair models. Therefore, we aim to explore the utilization of $\mathcal{F}_{S}$ to help learn more fair model meanwhile maintain high classification performance. The problem is defined as:

\vspace*{0.5em}
\noindent{}\textbf{Problem Definition} Given the data matrix $\mathbf{X}\in \mathbb{R}^{n \times m}$, with corresponding labels $\mathbf{y} \in \mathbb{R}^{n}$, and a predefined feature subset $\mathcal{F}_{S} \in \mathcal{F}$, where each $f_i \in \mathcal{F}$ called related feature which highly correlates with the unobserved protected attribute $S$, \textit{e.g., race or gender}, learn a classifier that maintains high accuracy and is fair on $S$.
\vspace*{0.5em}

Note that we assume $\mathcal{F}_{S} \in \mathcal{F}$ is given from domain knowledge or experts. In practice, $\mathcal{F}_{S}$ can be incomplete and noisy. We design {\method} that is able to re-weight each $f_i \in \mathcal{F}_{S}$, so that it has the potential of remaining effective, as shown in experiments.

\section{Preliminary Theoretical Analysis}\label{sec:prelimi}

In this paper, we adopt Pearson correlation coefficient to measure the correlation between two variables, defined as below:

\theoremstyle{definition}
\begin{definition}[Pearson Correlation Coefficient]
Pearson correlation coefficient measures the linear correlation between two random variables $X$ and $Y$ as:
\begin{equation}
\label{equation:CC}
    \rho_{X, Y} = \frac{\mathbb{E}[(X-\mu_{X}) \cdot (Y-\mu_{Y})]}{\sigma_{X} \cdot \sigma_{Y}},
\end{equation}
where $\mu_X$ is the mean and $\sigma_{X}$ is the standard deviation of $X$.
\end{definition}

Next, we will show a theorem on the propagation property of Pearson correlation coefficient, which justifies our motivation of using $\mathcal{F}_{S}$ to regularize model predictions in the case of absent $S$. 
Below, we first present a rule depicting the relation of three included angles in space, which is the basis of our proof. 

\begin{lemma}
\label{lemma:angle}
Given a unit sphere centered at origin $O$, $A, B$ and $C$ are three points on the surface of the sphere. Assume that the angle $AOB=\theta_1$ and the angle $BOC=\theta_2$, then the cosine value of angle $AOC$ is within: $[cos(\theta_1+\theta_2), cos(\theta_1-\theta_2)]$.
\end{lemma}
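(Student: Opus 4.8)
The plan is to identify the three points $A, B, C$ with the unit vectors $\mathbf{a}, \mathbf{b}, \mathbf{c}$ pointing from $O$ to them, so that every pairwise angle becomes a dot product: $\cos\theta_1 = \mathbf{a}\cdot\mathbf{b}$, $\cos\theta_2 = \mathbf{b}\cdot\mathbf{c}$, and, writing $\theta_3$ for the angle $AOC$, $\cos\theta_3 = \mathbf{a}\cdot\mathbf{c}$. The target inequality $\cos(\theta_1+\theta_2)\le \cos\theta_3 \le \cos(\theta_1-\theta_2)$ is then a statement purely about $\mathbf{a}\cdot\mathbf{c}$, so the whole problem reduces to bounding this one quantity once $\theta_1$ and $\theta_2$ are fixed.

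First I would choose coordinates that align $\mathbf{b}$ with the $z$-axis, i.e. $\mathbf{b}=(0,0,1)$. Since $\mathbf{a}$ makes angle $\theta_1$ with $\mathbf{b}$ and $\mathbf{c}$ makes angle $\theta_2$ with $\mathbf{b}$, I can write them in spherical form,
\[
\mathbf{a} = (\sin\theta_1\cos\phi_a,\ \sin\theta_1\sin\phi_a,\ \cos\theta_1), \qquad
\mathbf{c} = (\sin\theta_2\cos\phi_c,\ \sin\theta_2\sin\phi_c,\ \cos\theta_2),
\]
where $\phi_a,\phi_c$ are the azimuthal angles, which are the only remaining free parameters. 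Computing the (routine) dot product and collapsing the azimuthal terms via $\cos\phi_a\cos\phi_c+\sin\phi_a\sin\phi_c=\cos(\phi_a-\phi_c)$ gives the spherical law of cosines,
\[
\cos\theta_3 = \mathbf{a}\cdot\mathbf{c} = \cos\theta_1\cos\theta_2 + \sin\theta_1\sin\theta_2\,\cos(\phi_a-\phi_c).
\]

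The remaining step is to bound the right-hand side over the single free variable $\Delta\phi=\phi_a-\phi_c$. Because $\theta_1,\theta_2\in[0,\pi]$ we have $\sin\theta_1,\sin\theta_2\ge 0$, so the coefficient of $\cos(\Delta\phi)$ is non-negative; hence $\cos\theta_3$ is maximized at $\cos(\Delta\phi)=1$ and minimized at $\cos(\Delta\phi)=-1$, yielding
\[
\cos\theta_1\cos\theta_2-\sin\theta_1\sin\theta_2 \ \le\ \cos\theta_3 \ \le\ \cos\theta_1\cos\theta_2+\sin\theta_1\sin\theta_2 .
\]
Recognizing the two extremes through the angle-addition formulas as $\cos(\theta_1+\theta_2)$ and $\cos(\theta_1-\theta_2)$ finishes the argument.

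I do not expect a deep obstacle here, since this is essentially the spherical triangle inequality; the one point needing care is the sign of $\sin\theta_1\sin\theta_2$, which is exactly why the polar/azimuthal parametrization is the right setup. It makes transparent that all the variability of $\cos\theta_3$ is carried by the single parameter $\Delta\phi$, and that the extremal cases are precisely those where $A$ and $C$ lie in a common plane with $\mathbf{b}$ on the same side (giving $\theta_3=|\theta_1-\theta_2|$) or on opposite sides (giving $\theta_3=\theta_1+\theta_2$).
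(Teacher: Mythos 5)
Your proof is correct and takes essentially the same route as the paper: both arguments rest on the spherical law of cosines $\cos\theta_3 = \cos\theta_1\cos\theta_2 + \sin\theta_1\sin\theta_2\cos(\cdot)$, followed by bounding the free cosine factor in $[-1,1]$ and using $\sin\theta_1\sin\theta_2 \ge 0$. The only difference is that the paper cites the spherical law of cosines as a known result, while you derive it via the coordinate parametrization with $\mathbf{b}$ on the $z$-axis, making your version self-contained (and additionally exhibiting the extremal configurations that show the bounds are attained).
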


\begin{proof}
From Spherical law of cosines~\cite{gellert2012vnr}, we can know that:
\begin{equation}
    cos \theta_3 = cos\theta_1 cos\theta_2 + sin\theta_1sin\theta_2cosB',
\end{equation}
where $B'$ corresponds to the angle opposites $B$ in spherical triangle $ABC$. As all angles are in the scale $[0, \pi]$, we can directly induce:
\begin{equation}
\begin{aligned}
   cos \theta_3 & \ge cos\theta_1 cos\theta_2 - sin\theta_1sin\theta_2  = cos(\theta_1 + \theta_2) \\ 
   cos \theta_3 & \leq cos\theta_1 cos\theta_2 + sin\theta_1sin\theta_2  = cos(\theta_1 - \theta_2),
\end{aligned}
\end{equation}
which completes the proof.
\end{proof}

Next, we will show the relationship between Pearson correlation coefficient and cosine similarity of two variables.
\begin{lemma}
\label{lemma:CosCorre}
Given two random variables $X, Y$, Pearson correlation coefficient between them can be calculated as the cosine distance between $\mathbf{x}'$ and $\mathbf{y}'$, where $\mathbf{x}'$ is an infinite-length vector constructed by sampling z-score value of $X$, i.e., $x'_i = \frac{X_i - \mu_{X}}{\sigma_X}$ and $X_i$ is the $i$-th sample. Similarly, $y'_i = \frac{y_i - \mu_{Y}}{\sigma_Y}$.
\end{lemma}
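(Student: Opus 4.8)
The plan is to reduce the statement to the single observation that the Pearson correlation coefficient is exactly the expectation of the product of the two z-score–standardized variables, and then to show that the cosine similarity of the sampled infinite-length z-score vectors converges to that same expectation. First I would rewrite Equation~(\ref{equation:CC}) by pulling the two standard deviations inside the expectation, giving
\[
\rho_{X,Y} = \mathbb{E}\!\left[\frac{X-\mu_X}{\sigma_X}\cdot\frac{Y-\mu_Y}{\sigma_Y}\right] = \mathbb{E}[x'\,y'],
\]
where $x' = (X-\mu_X)/\sigma_X$ and $y' = (Y-\mu_Y)/\sigma_Y$ are precisely the standardized variables whose samples populate the coordinates of $\mathbf{x}'$ and $\mathbf{y}'$. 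The key elementary fact I would record separately is that standardization forces $\mathbb{E}[x']=0$ and $\mathbb{E}[(x')^2]=\mathrm{Var}(x')=1$, and likewise for $y'$; these unit second moments are exactly what will make the denominator of the cosine collapse to $1$.

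Next I would write the cosine similarity of the two sampled vectors truncated to length $n$,
\[
\cos(\mathbf{x}',\mathbf{y}') = \frac{\sum_{i=1}^{n} x'_i\, y'_i}{\sqrt{\sum_{i=1}^{n} (x'_i)^2}\,\sqrt{\sum_{i=1}^{n} (y'_i)^2}},
\]
and divide both numerator and denominator by $n$ so that every sum becomes an empirical average. By the (strong) law of large numbers, as $n\to\infty$ the numerator average converges to $\mathbb{E}[x'\,y']=\rho_{X,Y}$, while the two averages under the square roots converge to $\mathbb{E}[(x')^2]=1$ and $\mathbb{E}[(y')^2]=1$. Passing to the limit then yields $\cos(\mathbf{x}',\mathbf{y}')\to \rho_{X,Y}/(\sqrt{1}\cdot\sqrt{1})=\rho_{X,Y}$, which is the asserted identity.

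The main obstacle is making the notion of an \emph{infinite-length} vector precise: the raw inner product $\sum_i x'_i y'_i$ and the raw norms individually diverge, so the equality holds only in the limiting, normalized sense. I would therefore phrase the result as the $n\to\infty$ limit of the finite-sample cosine similarity and emphasize that the common factor $1/n$ cancels between numerator and denominator, so the limit is well defined even though the unnormalized quantities are not. Invoking the law of large numbers (equivalently, treating the coordinates as i.i.d.\ samples of the standardized variables) is what legitimately replaces empirical averages by expectations and closes the argument.
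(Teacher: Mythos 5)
Your proof takes essentially the same route as the paper's: rewrite the Pearson coefficient as the expectation of the product of the z-score variables, and identify that expectation with the limiting cosine similarity of the sampled standardized vectors. Yours is in fact the more careful rendering of the argument --- the paper's displayed computation omits the $\frac{1}{n}$ averaging factor and never justifies why the norms in the cosine's denominator behave correctly, whereas you supply both via the law of large numbers and the fact that $\mathbb{E}[(x')^2]=\mathbb{E}[(y')^2]=1$.
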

\begin{proof}
This can be easily proven by re-writing the form of Pearson correlation coefficient as:
\begin{equation}
\small
\begin{aligned}
    \rho_{X,Y} & = \frac{\mathbb{E}[(X-\mu_{X}) \cdot (Y-\mu_{y})]}{\sigma_{X} \cdot \sigma_{Y}}  
    = \lim_{n\to\infty} \sum_{i=1}^{n} \frac{(X_i-\mu_{X}) \cdot (Y_i-\mu_{y})}{\sigma_{X} \cdot \sigma_{Y}} \\
    & = \lim_{n\to\infty} \sum_{i=1}^{n} x_i' \cdot y_i' = cos(\mathbf{x}', \mathbf{y}'),
\end{aligned}
\end{equation}
which completes the proof.
\end{proof}

With these preparations, we can now turn to our main theorem:
\begin{theorem}
\label{theorem:1}
Given three random variables $\{X, Y, Z\}$, with correlation coefficient $\rho_{X,Y} = cos \alpha$ and $\rho_{Y,Z} = cos \beta$, $\alpha, \beta \in [0, \pi]$,  then $\rho_{X, Z}$ is within $[cos(\alpha+\beta), cos(\alpha-\beta)]$. 
\end{theorem}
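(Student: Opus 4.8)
The plan is to reduce the statement to a purely geometric fact by reinterpreting each random variable as a unit vector and then chaining the two preceding lemmas. First I would apply Lemma~\ref{lemma:CosCorre} to represent $X$, $Y$, and $Z$ by their (infinite-length) z-score vectors $\mathbf{x}'$, $\mathbf{y}'$, and $\mathbf{z}'$, whose entries are $x_i' = (X_i - \mu_X)/\sigma_X$ and analogously for the others. Each z-score vector has unit norm in the limiting sense, since $\lim_{n\to\infty}\sum_{i=1}^n (x_i')^2 = \mathbb{E}[(X-\mu_X)^2]/\sigma_X^2 = 1$, so the three are unit vectors. By Lemma~\ref{lemma:CosCorre} the Pearson correlations then translate directly into cosines of the angles between these vectors: the angle between $\mathbf{x}'$ and $\mathbf{y}'$ equals $\alpha$ (because $\cos\alpha = \rho_{X,Y}$), and the angle between $\mathbf{y}'$ and $\mathbf{z}'$ equals $\beta$.

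Next I would observe that these three vectors span a subspace of dimension at most three, so after restricting to this subspace we may regard $\mathbf{x}'$, $\mathbf{y}'$, $\mathbf{z}'$ as three points $A$, $B$, $C$ on the surface of a unit sphere centered at the origin $O$. Under this identification the angle $AOB$ equals $\alpha$ and the angle $BOC$ equals $\beta$, which are exactly the hypotheses of Lemma~\ref{lemma:angle}. Applying that lemma with $\theta_1 = \alpha$ and $\theta_2 = \beta$ bounds the cosine of the angle $AOC$ by $[\cos(\alpha+\beta), \cos(\alpha-\beta)]$, and since that angle is precisely the angle between $\mathbf{x}'$ and $\mathbf{z}'$, its cosine is $\rho_{X,Z}$ by Lemma~\ref{lemma:CosCorre} again. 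This yields $\rho_{X,Z} \in [\cos(\alpha+\beta), \cos(\alpha-\beta)]$, which is the claim.

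I expect the only delicate point to be justifying that the geometric picture of Lemma~\ref{lemma:angle}, stated for points on an ordinary sphere, legitimately applies to vectors living in an infinite-dimensional sample space. The resolution is that any three vectors always lie in a common subspace of dimension at most three, so no genuine infinite-dimensional reasoning is needed and the spherical law of cosines used in Lemma~\ref{lemma:angle} holds verbatim after projecting onto that subspace. A secondary point worth stating explicitly is that the correspondence between a correlation value in $[-1,1]$ and an angle in $[0,\pi]$ is well-defined because $\cos$ is a bijection there, so $\alpha$ and $\beta$ are uniquely determined by $\rho_{X,Y}$ and $\rho_{Y,Z}$; this is what lets us pass freely between correlations and angles in both directions.
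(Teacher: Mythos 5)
Your proposal is correct and follows essentially the same route as the paper's proof: translate correlations into angles between z-score vectors via Lemma~\ref{lemma:CosCorre}, apply the spherical-angle bound of Lemma~\ref{lemma:angle}, and translate back. In fact your version is slightly more careful than the paper's, since you explicitly verify that the z-score vectors have unit norm and that the three vectors can be restricted to a subspace of dimension at most three before invoking the sphere picture --- details the paper's proof takes for granted.
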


\begin{proof}
The proof can be developed via the following steps:
\begin{enumerate}[leftmargin=*]
    \item Cosine similarity between $\mathbf{x}'$ and $\mathbf{y}'$ shows the cosine value of included angle between them. Hence, based on Lemma~\ref{lemma:CosCorre}, we can learn that the cosine of angle between $\mathbf{x}'$ and $\mathbf{y}'$ is $cos(\alpha)$ and that of angle between $\mathbf{y}'$ and $\mathbf{z}'$ is $cos(\beta)$ from the given correlation coefficients. 
    \item $\mathbf{x}'$, $\mathbf{y}'$, $\mathbf{z}'$ can be taken as line $OA$, $OB$, $OC$ in Lemma~\ref{lemma:angle} respectively. Hence, utilizing Lemma~\ref{lemma:angle}, we could induce that the cosine value of angle $\gamma$ between $\mathbf{x}$ and $\mathbf{z}$ should fall within the scale $[cos(\alpha+\beta), cos(\alpha-\beta)]$.
    \item Finally, based on Lemma\ref{lemma:CosCorre}, we can map the cosine value of angle $\gamma$ back into correlation coefficient between $X$ and $Z$.
\end{enumerate}
After these steps, we can obtain that $\rho_{X, Z}=cos(\mathbf{x}', \mathbf{z}') \in [cos(\alpha+\beta), cos(\alpha-\beta)]$ and finish the proof.
\end{proof} 


Basing on theorem~\ref{theorem:1}, we can show how the constraint of correlation scale is propagated from $\mathcal{F}_{S}$ to $S$ in Theorem~\ref{theorem:2}, which theoretically proves our idea.  
\begin{theorem}
\label{theorem:2}
Let $f$ and $S$ represent an input feature and sensitive attribute, respectively. Let $\hat{y}$ denotes the variable of model's prediction. Assume that $f$ is highly correlated with $S$, i.e., $\rho_{f,S}$ is larger than a positive constant $cos \alpha$. If the model is trained to make $\rho_{f,\hat{y}}$ near $0$, i,e, within $[cos (\frac{1}{2}\pi+\delta), cos (\frac{1}{2}\pi-\delta)]$, where $\delta$ is close to $0$,  then $\rho_{S,\hat{y}}$ would be within $[cos (\frac{1}{2}\pi+\delta+\alpha), cos (\frac{1}{2}\pi-\delta-\alpha) ]$. 
\end{theorem}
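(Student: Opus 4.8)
The plan is to reduce the statement to a direct application of Theorem~\ref{theorem:1}, identifying $S$, $f$, and $\hat y$ with the three random variables $X$, $Y$, $Z$. First I would rewrite the two hypotheses in angular form. Since Pearson correlation is symmetric and cosine is strictly decreasing on $[0,\pi]$, the assumption $\rho_{f,S}\ge\cos\alpha$ guarantees an angle $\alpha'\le\alpha$ with $\rho_{S,f}=\cos\alpha'$. Likewise, the training constraint $\rho_{f,\hat y}\in[\cos(\tfrac{\pi}{2}+\delta),\cos(\tfrac{\pi}{2}-\delta)]$ produces an angle $\beta'\in[\tfrac{\pi}{2}-\delta,\tfrac{\pi}{2}+\delta]$ with $\rho_{f,\hat y}=\cos\beta'$. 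Both $\alpha'$ and $\beta'$ then lie in $[0,\pi]$, so they are admissible inputs for Theorem~\ref{theorem:1}.

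Next I would invoke Theorem~\ref{theorem:1} with the identification $X=S$, $Y=f$, $Z=\hat y$, which immediately yields $\rho_{S,\hat y}\in[\cos(\alpha'+\beta'),\cos(\alpha'-\beta')]$. It then remains to enlarge this interval to the claimed one, i.e. $[\cos(\tfrac{\pi}{2}+\delta+\alpha),\cos(\tfrac{\pi}{2}-\delta-\alpha)]$, by controlling the two endpoints through the ranges of $\alpha'$ and $\beta'$. For the lower endpoint, I would use $\alpha'+\beta'\le\alpha+(\tfrac{\pi}{2}+\delta)$, so that monotonicity of cosine gives $\cos(\alpha'+\beta')\ge\cos(\tfrac{\pi}{2}+\delta+\alpha)$. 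For the upper endpoint, I would note $\beta'-\alpha'\ge(\tfrac{\pi}{2}-\delta)-\alpha$, so that $|\alpha'-\beta'|\ge\tfrac{\pi}{2}-\delta-\alpha$ and hence $\cos(\alpha'-\beta')\le\cos(\tfrac{\pi}{2}-\delta-\alpha)$, using that cosine is even. Combining the two endpoint bounds closes the argument.

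The main obstacle is the bookkeeping around monotonicity of cosine: every comparison of two cosines must be justified by keeping both compared angles inside $[0,\pi]$, the interval on which cosine is monotone, and by distinguishing the even extension $\cos(\alpha'-\beta')=\cos|\alpha'-\beta'|$. This is precisely where the hypotheses that $\delta$ is close to $0$ and that $\alpha$ is not too large become essential: they ensure $\tfrac{\pi}{2}-\delta-\alpha\ge 0$ and $\tfrac{\pi}{2}+\delta+\alpha\le\pi$, together with $0\le\beta'-\alpha'\le\tfrac{\pi}{2}+\delta\le\pi$, so that each endpoint inequality genuinely follows from monotonicity rather than failing near the boundary of the range. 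Once these range conditions are verified, the interval containment is routine and the theorem follows.
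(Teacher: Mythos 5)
Your proof is correct and takes essentially the same route as the paper: the paper's entire proof is the remark that Theorem~\ref{theorem:2} ``can be easily proved based on Theorem~\ref{theorem:1}'', which is precisely the reduction you carry out (identifying $X=S$, $Y=f$, $Z=\hat y$ and enlarging the resulting interval). Your version is in fact more careful than the paper's, since you make explicit the angle extraction ($\alpha'\le\alpha$, $\beta'\in[\tfrac{\pi}{2}-\delta,\tfrac{\pi}{2}+\delta]$) and the monotonicity/range conditions (e.g.\ $\tfrac{\pi}{2}+\delta+\alpha\le\pi$) that the paper leaves entirely implicit.
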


Theorem~\ref{theorem:2} can be easily proved based on Theorem~\ref{theorem:1}. From it, we can see that when $cos\alpha \approx 1$ and $\delta \approx 0$, $\rho_{S,\hat{y}}$ would also approximate $0$. In this way, the prediction would be insensitive towards $S$, achieving fairness w.r.t sensitive attribute $S$.


We can extend Theorem~\ref{theorem:2} to the case of utilizing multiple related features simultaneously. For a set of related features $\mathcal{F}_S=\{f_1, f_2, ..., f_K\}$, assume their correlation coefficient with $S$ in the form of $\{ cos\alpha_1, cos\alpha_2, ..., cos\alpha_K \}$, and with $\hat{Y}$ in the range of $[cos (\frac{1}{2}\pi+\delta), cos (\frac{1}{2}\pi-\delta)]$. Then $\rho_{S,\hat{Y}}$ would fall upon the intersections of their resulting value space, which can be written as: 
\begin{equation}
    \rho_{S,\hat{Y}} \in [cos (\frac{1}{2}\pi+\delta+\alpha_{min}), cos (\frac{1}{2}\pi-\delta-\alpha_{min}) ].
\end{equation}
where $\alpha_{min}$ is the smallest value in $\{ \alpha_1,  \alpha_2, ...,  \alpha_K\}$. Note that this range is usually not tight, and high divergence within $\mathcal{F}_S$ would often restrict the range of $\rho_{S, \hat{Y}}$ more.

\begin{figure}
  \centering
    \includegraphics[width=0.5\textwidth]{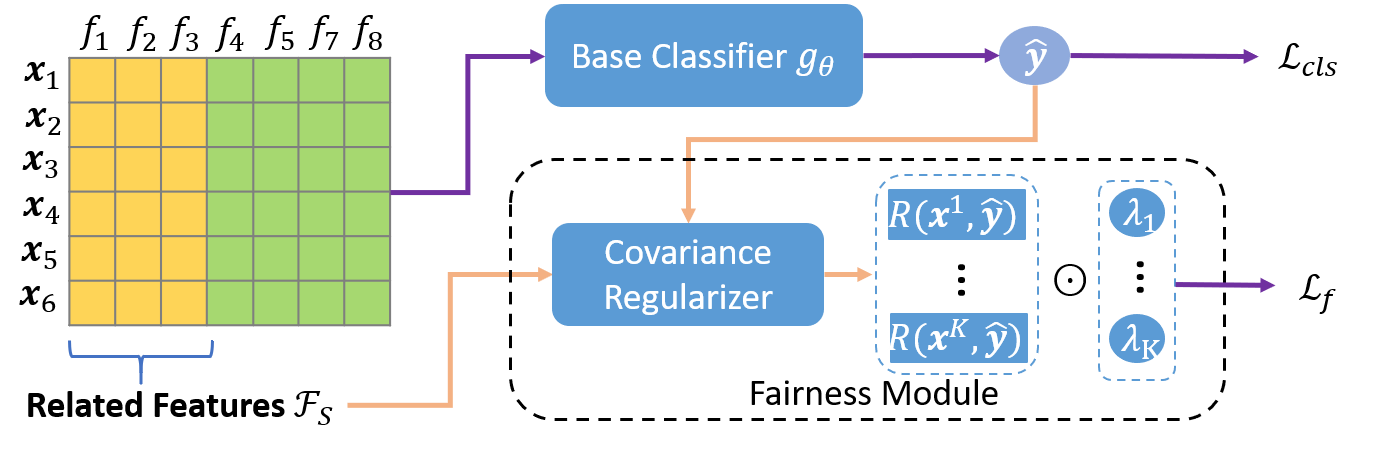}
    \vskip -1.5em
    \caption{An illustration of the proposed framework {\method}. In Fairness Constraint block, $\lambda_i$ controls the importance of regularization on $i$-th feature of $\mathcal{F}_S$.  $\boldsymbol\lambda$ is dynamically updated, reducing prior domain knowledge required. }\label{fig:idea}
\end{figure}

\section{Methodology} \label{sec:methodology}
In this section, we present the details of the proposed framework {\method}. The basic idea is using the regularization on correlated features $\mathcal{F}_S$ as the surrogate fairness objective. With the motivation theoretically justified in Sec~\ref{sec:prelimi}, an illustration of {\method} is shown in Figure~\ref{fig:idea}. It is composed of three parts: (i) a base classifier $g_{\theta}(\cdot)$ which predicts its label $\hat{y}_i$ given data sample $\mathbf{x}_i$; (ii) a covariance regularizer which constrains correlation between $\mathcal{F}_{S}$ and  $\hat{y}$ to achieve fairness; and (iii) an importance learning module which adjusts importance score $\lambda_j$ of each related feature $f_j \in \mathcal{F}_S$. Next, we introduce each component in detail. 

\subsection{Base Classifier}
The proposed {\method} is flexible to use various classifiers as backbone such as neural networks, logistic regression and SVM. Without loss of generality, we use $g_\theta(\cdot)$ to denote the base classifier, where $\theta$ is the set of parameters of the base classifier.  Following existing work on fairness~\cite{lahoti2020fairness}, we consider binary classification. We leave the extension to multi-class classification as future work. For a data sample $\mathbf{x}_i$, the predicted probability of $\mathbf{x}_i$ having label $1$ is
\begin{equation}
    \hat{y}_i = g_{\theta}(\mathbf{x}_i)
\end{equation}
Then the binary cross entropy loss for training the classifier $g_\theta(\cdot)$ can be written as
\begin{equation}
    \min_{\theta} \mathcal{L}_{cls} = \sum_{i=1}^n -y_i \log \hat{y}_i - (1 - y_i) \log (1 - \hat{y}_i)
\end{equation}
where $y_i \in \{0,1\}$ is the label of $\mathbf{x}_i$.

Generally, the well trained model is good at classification. However, as shown in previous studies~\cite{zhang2017achieving,beutel2017data}, the obtained model could make unfair predictions because spurious correlation may exist in the training data between sensitive attributes and labels due to societal bias. Though various efforts have been taken to mitigate the bias~\cite{dwork2012fairness,hardt2016equality,zafar2015fairness}, most of them require knowing the sensitive attributes. With the sensitive attributes unknown, to learn fair models, we propose to regularize the predictions using the related features $\mathcal{F}_S$ that are highly correlated with $S$, which will be introduced next.

\subsection{Exploring Related Features for Fairness}

If the sensitive attribute $s_i$ of each data sample $\mathbf{x}_i$ is known, we can adopt $s_i$ to achieve fairness of the classification model by making the prediction independent of the sensitive attributes~\cite{dwork2012fairness,zafar2015fairness}. Let $\mathbf{s} \in \mathbb{R}^{n \times 1}$ be the sensitive attribute vector with the $i$-th element of $\mathbf{s}$, i.e., $s_i$, as the sensitive attribute of $\mathbf{x}_i$. Similarly, let $\hat{\mathbf{y}} \in \mathbb{R}^{n \times 1}$ be the predictions with the $i$-th element being the prediction for $\mathbf{x}_i$. Following the design in ~\cite{zafar2015fairness,dai2020fairgnn}, the pursuit of non-dependence between prediction $\hat{y}$ and sensitive attribute $\mathbf{s}$ can be achieved through minimizing the correlation score between them, which can be mathematically written as: 
\begin{equation}
    \min_{\theta} \mathcal{R}(\mathbf{s}, \hat{\mathbf{y}}) = \Big{|}\sum_{i=1}^{n}(s_i - \mu_{s})(\hat{y}_i - \mu_{\hat{y}})\Big{|}
\end{equation}
where $\mu_s$ and $\mu_{\hat{y}}$ are the mean of $\mathbf{s}$ and $\hat{\mathbf{y}}$, respectively.
Note that we set constraints directly on the correlation score instead of correlation coefficient, but it can be seen from Eq.\ref{equation:CC} that it only differs from correlation coefficient by a constant multiplier $\sigma_{\mathbf{s}} \cdot \sigma_{\hat{y}}$. 
Constraining the scale of this regularization term, $\mathbf{s}$ and $\hat{y}$ would be encouraged to have no statistical correlation with each other.

\textit{However, as sensitive attribute $\mathbf{s}$ is unavailable in our problem, directly adopting the above regularization is impossible}. Fortunately, from Theorem~\ref{theorem:2}, we can see that if we have a set of non-sensitive features $\mathcal{F}_s$, with each feature $f_j \in \mathcal{F}_s$, i.e., $\mathbf{x}^{j}$, having high correlation with $\mathbf{s}$,  then reducing the correlation between $\mathbf{x}^{j}$ with $\hat{\mathbf{y}}$ can indirectly reduce the correlation between $\mathbf{s}$ and $\hat{\mathbf{y}}$, which helps to achieve fairness, even though $\mathbf{s}$ is unknown. Hence, in {\method}, we apply correlation regularization on each feature $f_j \in \mathcal{F}_S$, in the purpose of making trained model fair towards $S$. Without loss of generality, let the set of features in $\mathcal{F}_S$ be $\{f_1,\dots,f_K\}$, where $1 \le K <m$. 
The regularization term is written as
\begin{equation}
    \min_{\theta} \mathcal{R}_{related} = {\sum}_{j=1}^{K} \lambda_j \cdot \mathcal{R}(\mathbf{x}^j, \hat{\mathbf{y}}),
\end{equation}
where $\lambda_j$ is the weight for regularizing correlation coefficient between $\mathbf{x}^j$ and $\hat{\mathbf{y}}$.  $\mathcal{R}(\mathbf{x}^j, \hat{\mathbf{y}})$ is given as
\begin{equation}
    \mathcal{R}(\mathbf{x}^j, \hat{\mathbf{y}}) = \Big{|}\sum_{i=1}^{n}(X_{ij} - \mu_{x^j})(\hat{y}_i - \mu_{\hat{y}})\Big{|}
\end{equation}
where $\mu_{x^j}$ is the mean of $\mathbf{x}^j$.

Generally, if the correlation between $\mathbf{x}^j$ and $\mathbf{s}$ is large, we would prefer large $\lambda_j$ to enforce $\mathcal{R}(\mathbf{x}^j, \hat{\mathbf{y}})$ to be close to $0$, which can better reduce the correlation between $\mathbf{s}$ and $\hat{\mathbf{y}}$, resulting in a more fair classifier. If the correlation between $\mathbf{x}^j$ and $\mathbf{s}$ is not that large, a small $\lambda_j$ is preferred because under such case, making $\mathcal{R}(\mathbf{x}^j, \hat{\mathbf{y}})$ close to $0$ doesn't help much in making $\mathbf{s}$ and $\hat{\mathbf{y}}$ independent, but may instead introduce large noise in label prediction. Domain knowledge would be helpful in setting $\lambda_j$. 

\subsection{Learning Importance of Related Features}
One limitation of this approach is the requirement of pre-defined $\boldsymbol{\lambda}$. 
This information provides prior knowledge and is important for the success of the proposed proxy regularization. However, in real-world applications, it is difficult to get accurate values, and $\mathcal{F}_{S}$ could be inaccurate. In addition, \textit{$\lambda_j$ is also important in balancing the contribution of $f_j$ in model prediction and fairness.} Larger $\lambda_j$ will result in the independence between $\mathbf{x}^j$ and $\hat{\mathbf{y}}$, making $f_j$ contributes little in model prediction.  
Hence, in this section, we propose to learn $\boldsymbol{\lambda}$, allowing the model to automatically adjust its value.

Specifically, before learning, each related weight $\lambda_j$ is initialized to a pre-defined value $\lambda_j^0$, which serves as an inaccurate estimation of its importance. Then, during training, the value of $\boldsymbol{\lambda}$ will be optimized along with model parameters iteratively. As no other information is available, we update $\boldsymbol{\lambda}$ by minimizing the total regularization loss, based on the intuition that an ideal surrogate correlation regularization should be achieved without causing significant performance drop. We limit the range of $\boldsymbol{\lambda}$ as $[0, 1]$, and the full optimization objective function can be written as follows: 
\begin{equation}
    \min_{\boldsymbol{\theta}, \boldsymbol{\lambda}} ~ \mathcal{L}_{cls} + \eta \cdot {\sum}_{j=1}^{K} \lambda_j \cdot \mathcal{R}(\mathbf{x}^{j}, \hat{\mathbf{y}}) 
\begin{aligned} \label{eq:obj_1}
    \text{\quad s.t. \quad}   \lambda_j \ge 0,  \forall f_j \in \mathcal{F}_S; ~ {\sum}_{j=1}^{K} \lambda_j = 1
\end{aligned}
\end{equation}
where $\eta$ sets the weights of regularization term, and $\boldsymbol{\theta}$ is the set of parameters of the classifier. 

Eq.(\ref{eq:obj_1}) can lead to a trivial solution, i.e., to minimize the cost function, it tends to set $\lambda_j$ corresponding to the smallest $\mathcal{R}(\mathbf{x}^j, \hat{\mathbf{y}})$ to $1$ and others to $0$. To alleviate this issue, we add $\|\boldsymbol{\lambda}\|_2^2$ to penalize $\lambda_j$ being close to 1. Thus, The final objective function of {\method} is
\begin{equation}
\begin{aligned} \label{eq:obj}
    \min_{\boldsymbol{\theta}, \boldsymbol{\lambda}} \quad & \mathcal{L}_{cls} + \eta \cdot {\sum}_{j=1}^{K} \lambda_j \cdot \mathcal{R}(\mathbf{x}^{j}, \hat{\mathbf{y}}) + \beta \|\boldsymbol{\lambda}\|_2^2 \\ 
    \text{s.t.} \quad &   \lambda_j \ge 0,  \forall f_j \in \mathcal{F}_S; \quad {\sum}_{j=1}^{K} \lambda_j = 1
\end{aligned}
\end{equation}
where $\beta$ is used to control the contribution of $\|\boldsymbol{\lambda}\|_2^2$.


\section{Optimization Algorithm} \label{sec:optimization_algorithm}
The objective function in Eq.(\ref{eq:obj}) is constrained optimization, which is difficult to be optimized directly. We take the alternating direction optimization~\cite{goldstein2014fast} strategy to update $\theta$ and $\boldsymbol{\lambda}$ iteratively. The basic idea is to update one variable with the other one fixed at each step, which can ease the optimization process. Next, we give the details.


\textbf{Update $\boldsymbol{\theta}$.} To optimize $\boldsymbol{\theta}$,  we fix $\boldsymbol{\lambda}$ and remove terms that are irrelevant to $\boldsymbol{\theta}$, which arrives at
\begin{equation}
    \min_{\boldsymbol{\theta}} ~ \mathcal{L}_{cls} + \eta {\sum}_{j=1}^{K} \lambda_j \cdot \mathcal{R}(\mathbf{x}^{j}, \hat{\mathbf{y}})
\end{equation}
This is a non-constrained cost function, and we can directly apply gradient descent to learn $\boldsymbol{\theta}$.


\textbf{UPDATE $\boldsymbol{\lambda}$.}
Then, given $\boldsymbol{\theta}$ at the current step, $\boldsymbol{\lambda}$ can be obtained through solving the following optimization problem:
\begin{equation}
\begin{aligned}
    \boldsymbol{\lambda} =
    & \argmin_{\boldsymbol{\lambda}}  {\sum}_{j=1}^{K} \lambda_j \cdot \mathcal{R}(\mathbf{x}^j, \hat{\mathbf{y}}) + \beta \|\boldsymbol{\lambda}\|_2^2, \\
    \text{s.t.} ~ &   -\lambda_j \leq 0, \forall f_j \in \mathcal{F}_S; \quad {\sum}_{j=1}^{K} \lambda_j - 1 = 0
\end{aligned}
\end{equation}
It is a convex primal problem, and strong duality holds as it follows \textit{Slater's condition}. For simplicity of notation, we use $\mathcal{R}_j$ to represent $\mathcal{R}(\mathbf{x}^{j}, \hat{\mathbf{y}})$. Then, we can solve this problem using Karush-Kuhn-Tucker(KKT)~\cite{mangasarian1994nonlinear} conditions as: 
\begin{equation}
    \begin{cases}
    \mathcal{R}_j + 2\beta \cdot \lambda_j - u_j + v = 0, ~ \forall j;~~~\textit{(stationary)} \\
    u_j \cdot \lambda_j = 0, ~ \forall j; ~~~~\textit{(complementary slackness)}\\
    \lambda_j \geq 0 \quad \forall j; \quad \sum_{j=1}^K\lambda_j = 1; \textit{(primal feasibility)}\\
    u_j \geq 0 \quad \forall j.
    \end{cases}
\end{equation}
In the above equation, $\boldsymbol{u}$ and $v$ are Lagrange multipliers. From the stationary condition, we can get:
\begin{equation}
    \lambda_j = \frac{u_j-v-\mathcal{R}_j}{2\cdot \beta}, \quad  j=1,\dots,K
\end{equation}
Eliminating $\boldsymbol{u}$ using complementary slackness, we have:
\begin{equation}
    \begin{cases}
    \lambda_j = 0, \quad\quad\quad \text{if}~ u_j = v+\mathcal{R}_j \geq 0; \\
    \lambda_j = \frac{-v-\mathcal{R}_j}{2\cdot \beta},  \quad \text{if} ~ u_j = 0; \\
    \lambda_j \geq 0 \quad \forall j; \quad \sum_j^{K} \lambda_j = 1
    \end{cases}
\end{equation}
From this condition, we know that $\lambda_j=\max \{ 0, \frac{-v-\mathcal{R}_j}{2\cdot \beta}\}$. Since $\sum_{j=1}^{K} \lambda_j = 1$, $v$ can be computed via solving the following equation:
\begin{equation}\label{eq:opti_v}
    {\sum}_{j=1}^K \max \{ 0, -v-\mathcal{R}_j\} = 2\beta.
\end{equation}
Solving the above equation can be done as follows: we first rank $\mathcal{R}_j$ in descending order as $\mathcal{R}_j'$, i.e., $\mathcal{R}_{j-1}' \ge \mathcal{R}_j'$. Assume that $v$ is within $[-\mathcal{R}_{l-1}', -\mathcal{R}_l']$, then the above equation is reduced to
\begin{equation}
    {\sum}_{j=l}^K -v-\mathcal{R}_j' = 2\cdot \beta
\end{equation}
Then, we have
\begin{equation}
    v=-\frac{2\cdot \beta +\sum_{j=l}^{K}\mathcal{R}_j'}{K-l+1}
\end{equation}
If $v=-\frac{2\cdot \beta +\sum_{j=l}^{K}\mathcal{R}_j'}{K-l+1} \in [-\mathcal{R}_{l-1}', -\mathcal{R}_l']$, it is a valid solution; otherwise, it is invalid. We do this for every interval and find $v$.
With $v$ learned, we can calculate $\boldsymbol{\lambda}$ as:
\begin{equation} \label{eq:opti_lambda}
    \lambda_j=\max \{ 0, \frac{-v-\mathcal{R}_j}{2\cdot \beta}\}
\end{equation}

\textbf{Training Algorithm.} With the updating rules above, the full pipeline of the training algorithm for FairRF can be summarized in Algorithm 1 in the supplementary material.

\section{Experiment} \label{sec:experiment}
In this section, we conduct experiments to evaluate the effectiveness of the proposed {\method} in terms of both fairness and classification performance when sensitive attributes are unavailable. 
In particular, we aim to answer the following research questions:
\begin{itemize}[leftmargin=*]
    \item \textbf{RQ1} Can the proposed {\method} achieve fairness without sensitive attributes while maintain high accuracy?
    \item \textbf{RQ2} How would {\method} perform when the provided $\mathcal{F}_{S}$ contains misidentified related features or is incomplete? 
    \item \textbf{RQ3} How would different choices of hyper-parameters influence the performance of {\method}?
\end{itemize}

\subsection{Datasets}

We conduct experiments on three publicly available benchmark datasets, including Adult~\cite{asuncion2007uci}, COMPAS~\cite{Julia2016machine} and LSAC~\cite{wightman1998lsac}.
\begin{itemize}[leftmargin=*]
    \item \textbf{ADULT}\footnote{https://archive.ics.uci.edu/ml/machine-learning-databases/adult/}: It contains $45,221$ records of personal yearly income, with binary label indicating if the yearly salary is over or under $\$50K$. Gender is considered as sensitive attribute. and we select age, relation and marital status as $\mathcal{F}_s$.
    \item \textbf{COMPAS}\footnote{https://github.com/propublica/compas-analysis}: This dataset assesses the possibility of recidivism within a certain future, containing $11,750$ criminal records collected in US.The race of each defendant is the sensitive attribute. In constructing $\mathcal{F}_s$, score, decile text and sex are selected. 
    \item \textbf{LSAC}\footnote{http://www.seaphe.org/databases.php}: It contains $65,307$ admissions data from $25$ law schools in US over the 2005, 2006, and 2007 admission cycles. Labels indicate whether each candidate successfully pass the bar exam or not, and their gender information is considered as sensitive. For this dataset, we use race, year and residence as $\mathcal{F}_s$.
\end{itemize}

We make the train:eval:test splits as $5:2:3$. \textit{Note that for all three datasets, features in $\mathcal{F}_s$ are selected following existing analysis or prior domain knowledge.} For example, in COMPAS, biases towards race have been found to exist in score and decile text~\cite{Julia2016machine}. The correlation between race and gender is also from reports by U.S. Bureau of Justice Statistics(BJS). Since race is the sensitive attribute of the dataset, we include score, decile text and gender in $\mathcal{F}_S$. 

\subsection{Experimental Settings}
\subsubsection{Baselines}
To evaluate the effectiveness of {\method}, we first compare it with the vanilla model and sensitive-attribute-aware model, which can be treated as the lower and upper bound of our model's performance:
\begin{itemize}[leftmargin=*]
    \item \textbf{Vanilla model}: It directly uses the base classifier without any regularization terms. It is used to show the performance without fairness-assuring algorithm taken. 
    \item \textbf{ConstrainS}: In this baseline, we assume that the sensitive attribute of each data sample is known. We add the correlation regularization between sensitive attribute vector $\mathbf{s}$ and model output $\hat{\mathbf{y}}$, i.e., $\mathcal{R}(\mathbf{s},\hat{\mathbf{y}})$. It sets a reference point for the performance of the proposed framework. Note that for all the other baselines and our model, $\mathbf{s}$ is unknown.
\end{itemize}
We also include following representative approaches in fair learning without sensitive attributes as baselines:
\begin{itemize}[leftmargin=*]
    \item \textbf{KSMOTE}~\cite{yan2020fair}: It performs clustering to obtain pseudo groups, and use them as substitute. The model is regularized to be fair with respect to those pseudo groups. 
    \item \textbf{RemoveR}: This method directly removes all candidate related features, i.e., $\mathcal{F}_S$. We design this baseline in order to validate the benefits of our proposed method in regularizing related features. 
    \item \textbf{ARL}~\cite{lahoti2020fairness} It follows Rawlsian principle of Max-Min welfare for distributive justice. It optimizes model's performance through re-weighting regions detected by an adversarial model. 
\end{itemize}

\textit{Note that the fairness formulation of ARL is different from the group fairness we focus on.} ARL~\cite{lahoti2020fairness} is inefficient in obtaining demographic fairness by design, which is also verified by our experiments. Although not working on the same fairness definition, we still include it as one baseline for completeness of the experiment.

\subsubsection{Configurations}
For KSMOTE, we directly use the code provided by ~\cite{yan2020fair}. For all other approaches, we implement a multi-layer perceptron (MLP) network with three layers as the backbone classifier. The two hidden dimensions are $64$ and $32$. Adam optimizer is adopted to train the model, with initial learning rate as $0.001$.

\begin{table}[t]
  \setlength{\tabcolsep}{4.5pt}
  \centering
  \caption{Comparison of different approaches on ADULT.} 
  \label{tab:result_adult}
  \vskip -1em
  \begin{tabular}{p{1.5cm} | p{1.8cm} p{1.8cm}  p{1.8cm}  }
    \hline 
    Methods
    & ACC  & $\Delta_{EO}$ & $\Delta_{DP}$   \\
    \hline
    Vanilla & $0.856\pm0.001$ & $0.046\pm0.006$ & $0.089\pm0.005$  \\
    ConstrainS & $0.845\pm0.002$ & $0.040\pm0.004$ & $0.058\pm0.003$  \\
    \hline
    ARL  & $0.861\pm0.003$ & $0.034\pm0.012$ & $0.141\pm0.008$  \\
    \hline
    KSMOTE  & $0.560\pm0.002$ & $0.141\pm0.031$ & $0.120\pm0.022$ \\
    RemoveR & $0.801\pm0.010$ & $0.124\pm0.004$ & $0.071\pm0.002$  \\
    \hline
    {\method} & $0.832\pm0.001$ & $\textbf{0.025}\pm0.009$ & $\textbf{0.066}\pm0.004$  \\
    \hline
  \end{tabular}
  \vskip 0.5em
  
  \caption{Comparison of different approaches on COMPAS} 
  \label{tab:result_compas}
  \vskip -1em
  \begin{tabular}{p{1.5cm} | p{1.8cm}  p{1.8cm}  p{1.8cm} }
    \hline 
    Methods
     & ACC  & $\Delta_{EO}$ & $\Delta_{DP}$ \\
    \hline
    Vanilla & $0.681\pm0.004$ & $0.242\pm0.021$ & $0.171\pm0.015$  \\
    ConstrainS & $0.674\pm0.002$ & $0.154\pm0.032$ & $0.122\pm0.031$  \\
    
    \hline
    ARL  & $0.672\pm0.023$ &  $0.197\pm0.042$ & $0.286\pm0.033$  \\
    \hline
    KSMOTE  & $0.601\pm0.021$ &  $0.203\pm0.042$ & $0.151\pm0.023$   \\
    RemoveR  & $0.595\pm0.024$ & $0.205\pm0.049$ & $0.185\pm0.024$  \\
    \hline
    {\method} & $0.661\pm0.009$ & $\textbf{0.166}\pm0.022$ & $\textbf{0.143}\pm0.021$  \\
    \hline
  \end{tabular}
  \vskip 0.5em

  \caption{Comparison of different approaches on LSAC.}
  \label{tab:result_law}
  \vskip -1em
  \begin{tabular}{p{1.5cm}  | p{1.8cm}  p{1.8cm}  p{1.8cm} }

    \hline 
    Methods
    & ACC  & $\Delta_{EO}$ & $\Delta_{DP}$ \\
    \hline
    Vanilla  &$0.805\pm0.001$& $0.042\pm0.007$ & $0.016\pm0.004$ \\
    ConstrainS & $0.801\pm0.001$ &$0.014\pm0.007$ & $0.004\pm0.002$ \\
    
    \hline
    ARL  & $0.811\pm0.005$  & $0.029\pm0.029$ & $0.022\pm0.013$ \\
    \hline
    KSMOTE  & $0.722\pm0.012$  & $0.028\pm0.062$ & $0.012\pm0.041$ \\
    RemoveR & $0.763\pm0.002$ & $0.037\pm0.024$ & $0.015\pm0.006$ \\
    \hline
    {\method} & $0.796\pm0.002$ & $\textbf{0.023}\pm0.008$ & $\textbf{0.007}\pm0.004$ \\
    \hline
  \end{tabular}
  \vskip -1em
\end{table}

\subsubsection{Evaluation Metrics}
To measure the fairness, following existing work on fair models~\cite{verma2018fairness,yan2020fair}, we adopt two widely used evaluation metrics, i.e., equal opportunity and demographic parity, which are defined as follows:

\vspace*{0.5em}
\noindent{}\textbf{Equal Opportunity}~\cite{mehrabi2019survey} Equal opportunity requires that the probability of positive instances with arbitrary protected attributes $i, j$ being assigned to a positive outcome are equal:
\begin{equation}
    {\mathbb{E}}(\hat{y} \mid S=i, y=1) = {\mathbb{E}}(\hat{y} \mid S=j, y=1),
\end{equation}
where $\hat{y}$ is the output of model $g_{\theta}$, representing the probability of being predicted as positive. In the experiments, we report difference in equal opportunity($\Delta_{EO}$):
\begin{equation}
    \Delta_{EO} =  |{\mathbb{E}}(\hat{y} \mid S=i, y=1) - {\mathbb{E}}(\hat{y} \mid S=j, y=1)|
\end{equation}

\noindent{}\textbf{Demographic Parity}~\cite{mehrabi2019survey} Demographic parity requires the behavior of prediction model to be fair on different sensitive groups. Concretely, it requires that the positive rate across sensitive attributes are equal:
\begin{equation}
    {\mathbb{E}}(\hat{y} \mid S=i) = {\mathbb{E}}(\hat{y} \mid S=j), \forall i, j
\end{equation}
Similarly, in the experiment, we report the difference in demographic parity($\Delta_{DP}$):
\begin{equation}
    \Delta_{DP} = |{\mathbb{E}}(\hat{y} \mid S=i) - {\mathbb{E}}(\hat{y} \mid S=j)|
\end{equation}

Equal opportunity and demographic parity measure the fairness from different perspectives. Equal opportunity requires similar performance across protected groups, while demographic parity is more focused on fair demographics. The smaller $\Delta_{EO}$ and $\Delta_{DP}$ are, the more fair a model is. Furthermore, to measure the classification performance, \textbf{accuracy} (ACC) is also reported.

\subsection{Classification Performance Comparison}\label{sec:mainEXP}
To answer \textbf{RQ1}, we fix the base classifier as MLP and conduct classification on all three datasets. For all the baselines, the hyperparameters are tuned via grid search on the validation dataset. In particular, for {\method}, $\beta$ is set to $0.5$ on ADULT, $0.8$ on COMPAS, and $1.0$ on LSAC. $\eta$ is set as $0.15$ for COMPAS and $0.3$ for other two datasets. More details on the hyperparameters sensitivity will be discussed in Sec~\ref{sec:sensitive}. Each experiment is conducted $5$ times and the average performance in terms of accuracy, $\Delta_{EO}$ and $\Delta_{DP}$ with standard deviation are reported in Table~\ref{tab:result_adult}, Table~\ref{tab:result_compas} and Table \ref{tab:result_law}.  From the tables, we make the following observations:
\begin{itemize}[leftmargin=*]
    \item Constraining related features can help the model to perform fairer on sensitive groups. For example, compared with vanilla approach in which no fair-learning techniques are applied, {\method} shows a clear improvement w.r.t Equal Opportunity and Demographic Parity across all three datasets;
    \item {\method} improves the fairness without causing significant performance drop, and works stably. No pre-computed clusters are required, and it does not involve training an adversarial model, hence {\method} can get results with less deviation compared to ARL and KSMOTE; 
    \item Compared with baselines without sensitive attribute, {\method} is effective for both two fairness metrics; while other approaches such as ARL is able to improve on ``equal opportunity'', but the performance would drop w.r.t ``demographic parity''. This is because {\method} is able to learn $\lambda_j$ to balance the fairness and accuracy.
\end{itemize}

\begin{figure*}[t]
  \centering
    \captionsetup[subfigure]{aboveskip=-2pt,belowskip=-2pt}
    \subfloat[ACC]{
		\includegraphics[width=0.27\textwidth]{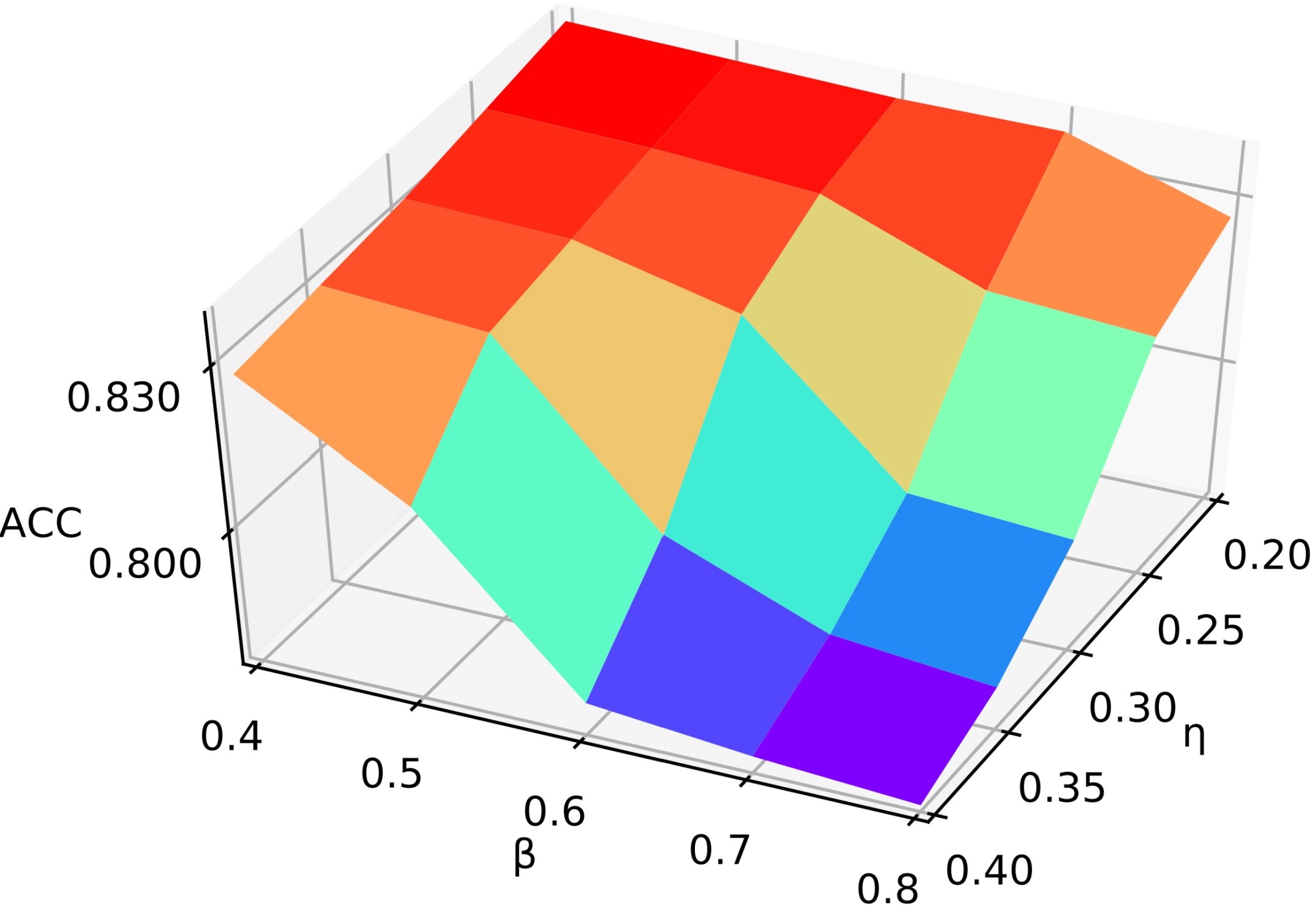}}~~
    \subfloat[$\Delta_{EO}$]{
		\includegraphics[width=0.27\textwidth]{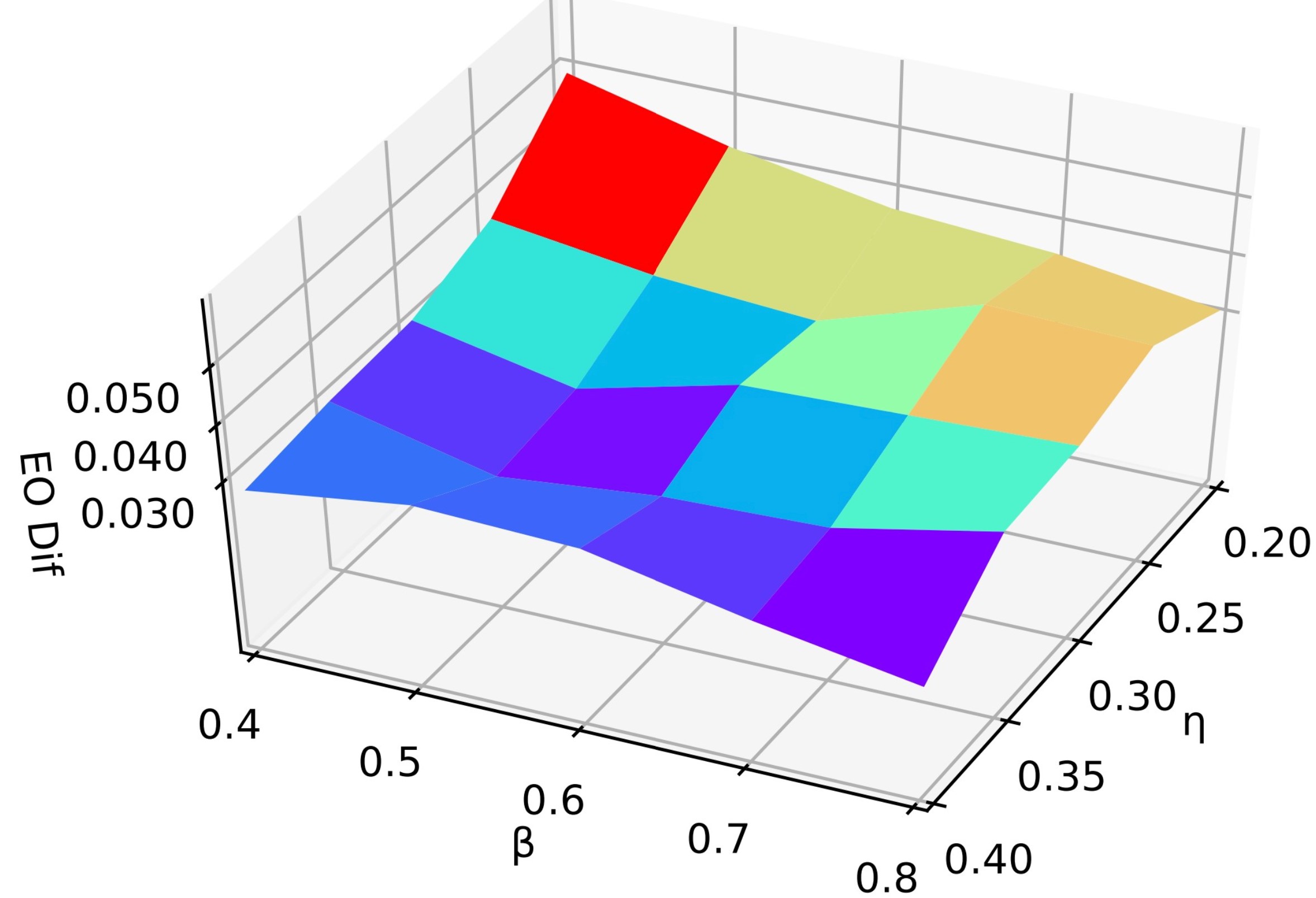}}~~
    \subfloat[$\Delta_{DP}$]{
		\includegraphics[width=0.27\textwidth]{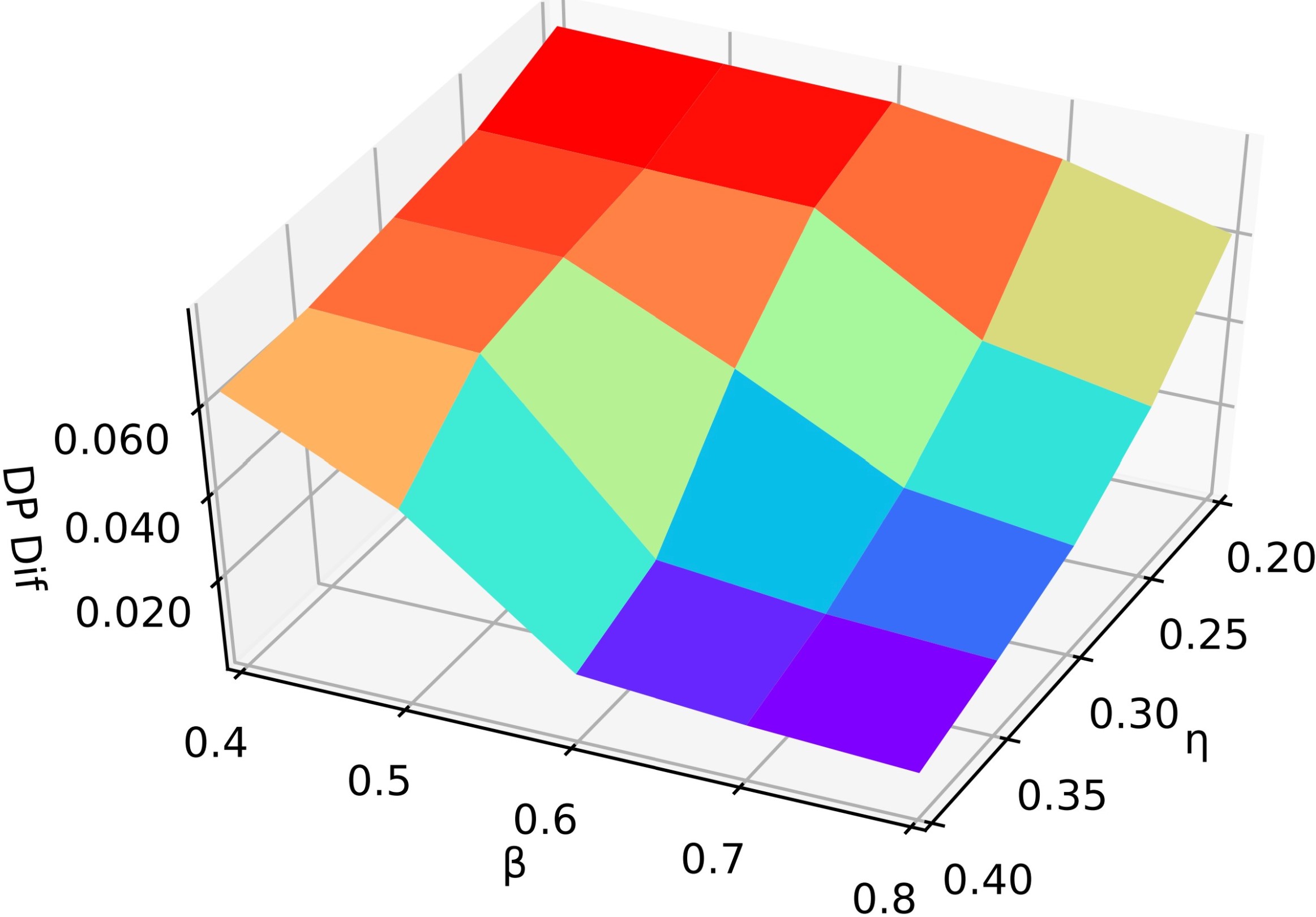}}
	\vskip -1.5em
    \caption{Parameter Sensitivity on ADULT}\label{fig:parameter_sensitivity}
    \vskip -1em
\end{figure*}

\subsection{Impact of the Quality of $\mathcal{F}_S$ on {\method}}
In this section, we conduct experiment to investigate the impact of the quality of $\mathcal{F}_S$ on the performance of {\method} to answer \textbf{RQ2}. In particular, we consider the following variants of {\method}:
\begin{itemize}[leftmargin=*]
    \item \textbf{Random}: We randomly select five sets of $\mathcal{F}_S$ with the same number of attributes as {\method}. Average results are reported. We use it to show the influence of prior knowledge.
    \item \textbf{Fix-$\boldsymbol{\lambda}$}: The same $\lambda_i$ is adopted for all related features, and its value is not automatically updated during training. Selected related features are exactly the same as those chosen in {\method}.
    \item \textbf{Top-1}: It uses only the most-effective related features. We test all candidates and select the one that achieves highest performance when used as related feature, and report its performance.
    \item \textbf{ConstrainAll}: It includes all features in $\mathcal{F}_S$, i.e., all features are treated as related features. This is used to show if noisy features are included or no prior knowledge about related feature is given,  {\method} can still work. We also learn $\boldsymbol\lambda$ for this variant.
    \item \textbf{Noisy}: Its contains features randomly sampled from both $\mathcal{F}_{S}$ and non-related attributes. In implementation, we randomly replace one attribute in $\mathcal{F}_{S}$ with non-related ones.
\end{itemize}
For all these baselines, hyper-parameters are found via grid search, and experiments are conducted for $5$ times randomly. From Table \ref{tab:ablation} and \ref{tab:ablation_compas}, we can make following observations:
\begin{itemize}[leftmargin=*]
    \item {\method} can still bring improvements when $\mathcal{F}_{S}$ is inaccurate. The variant Noisy is shown to be effective across ADULT and COMPAS datasets.
    \item In the extreme case that no prior knowledge is available, {\method} still has potentials on fairness metrics compared with vanilla model, as shown by Random and ConstrainAll. It again shows that {\method} can cope with little domain knowledge scenario. 
    \item {\method} benefits from automatically learning the importance of each given related attribute. Compared with Fix-$\boldsymbol\lambda$, {\method} shows a much stronger fairness in terms of equal opportunity, and achieves better accuracy at the same time. 
    \item {\method} shows a moderate improvement compared with Top-1. However, Top-1 requires careful selection of the most effective related feature, while {\method} can achieve better performance with less prior domain knowledge; 
\end{itemize}
Due to space limitation, we only report the results on ADULT and COMPAS, but similar observations can be made on LSAC.

\begin{table}[t]
  \setlength{\tabcolsep}{4.5pt}
  \centering
  \caption{Comparison of different strategies in selecting related features on ADULT. }
  \label{tab:ablation}
  \vskip -1em
  \begin{tabular}{| p{2.3cm} | p{1.6cm} p{1.6cm}  p{1.6cm}|  }
    \hline
    Methods &   ACC &   $\Delta_{EO}$  &   $\Delta_{DP}$ \\
    \hline
    \hline
    Vanilla & $0.856\pm0.001$ & $0.046\pm0.006$ & $0.089\pm0.005$ \\
    \hline
    Random & $0.830\pm0.001$  & $0.041\pm0.012$  & $0.057\pm0.007$  \\
    Top-1 & $0.830\pm0.002$ & $0.029\pm0.008$ & $0.067\pm0.002$ \\
    ConstrainAll & $0.835\pm0.001$ & $0.035\pm0.005$ & $0.068\pm0.003$ \\
    Noisy & $0.834\pm0.002$ & $0.030\pm0.011$ & $0.068\pm0.006$ \\
    \hline
    Fix-$\boldsymbol\lambda$ & $0.822\pm0.002$ & $0.065\pm0.007$ & $0.057\pm0.004$  \\
    {\method} & $0.832\pm0.001$ & $\textbf{0.025}\pm0.009$ & $0.066\pm0.004$ \\
    
    \hline
  \end{tabular}
  
    \caption{Comparison of different strategies in selecting related features on COMPAS. }
  \label{tab:ablation_compas}
  \vskip -1em
  \begin{tabular}{| p{2.3cm} | p{1.6cm} p{1.6cm}  p{1.6cm}|  }
    \hline
    Methods &   ACC &   $\Delta_{EO}$  &   $\Delta_{DP}$ \\
    \hline
    \hline
    Vanilla & $0.681\pm0.004$ & $0.242\pm0.021$ & $0.171\pm0.015$ \\
    \hline
    Random & $0.637\pm0.006$  & $0.226\pm0.028$  & $0.161\pm0.016$  \\
    Top-1 & $0.648\pm0.007$ & $0.183\pm0.016$ & $0.164\pm0.013$ \\
    ConstrainAll & $0.651\pm0.004$ & $0.235\pm0.012$ & $0.168\pm0.008$ \\
    Noisy & $0.653\pm0.006$ & $0.219\pm0.023$ & $0.154\pm0.019$ \\
    \hline
    Fix-$\boldsymbol\lambda$ & $0.631\pm0.011$ & $0.256\pm0.025$ & $0.159\pm0.018$  \\
    {\method} & $0.661\pm0.009$ & $\textbf{0.166}\pm0.022$ & $\textbf{0.143}\pm0.021$ \\
    
    \hline
  \end{tabular}
  \vskip -1.5em
\end{table}

    

\subsection{Parameter Sensitivity Analysis}\label{sec:sensitive}
In this subsection, we analyze the sensitivity of {\method} on hyperparameters $\eta$ and $\beta$. $\eta$ controls the importance of coefficient regularization term, and $\beta$ can adjust the distribution of learned $\boldsymbol\lambda$. We vary $\eta$ as $\{0.2,0.25,0.3,0.35,0.4\}$ and $\beta$ as $\{0.4,0.5, 0.6,0.7,0.8\}$. Other settings are the same as {\method}. This experiment is performed on ADULT, with results shown in Figure~\ref{fig:parameter_sensitivity}. From the figure, we can observe that: (i) Larger $\eta$ will achieve fairer predictions, but may also cause severe drop in accuracy when it is larger than some thresholds; (ii) Generally, smaller $\beta$ requires larger $\eta$ to achieve fairness. Small $\beta$ allows learned $\boldsymbol\lambda$ to be sparse. As a result, a large portion of coefficient regularization term could be enforced on less-discriminative attributes that are less-related at the same time; and (iii) $\beta$ encourages learned $\boldsymbol\lambda$ to be uniform, resulting a faster drop in accuracy when $\eta$ goes large. These observations could help to find suitable hyper-parameter choices in other applications.

\subsection{Case Study on $\boldsymbol \lambda$}
In this subsection, we conduct case studies to analyze the behavior of {\method} in  learning $\boldsymbol\lambda$, i.e., the weights of related attributes. Specifically, we calculate the ground-truth correlation between the sensitive attribute $S$ and others are computed, and a set of attributes with varying range of correlation coefficient magnitudes are selected as $\mathcal{F}_{S}$. $\eta$ and $\beta$ are set using grid search to make sure that fairness is obtained without significant drop in accuracy.We report the distribution of learned $\boldsymbol\lambda$. Results on ADULT and COMPAS are shown in Table~\ref{tab:case_study}.
\begin{table}[t]
  \setlength{\tabcolsep}{4.5pt}
  \centering
  \caption{Examples of learned $\boldsymbol\lambda$ on a set of selected related attributes. $\rho_{\cdot, Y}$ represents its correlation with class label, and $\rho_{\cdot, S}$ is the correlation with sensitive attributes $S$.}
  \label{tab:case_study}
  \vskip -1em
  \begin{tabular}{ | c c c c | c c c c |  }
    \hline
     \multicolumn{4}{|c}{ADULT}  &   \multicolumn{4}{|c|}{COMPAS} \\
    \hline
     Attr & $\rho_{\cdot, Y}$ & $\rho_{\cdot, S}$ & $\lambda$ & Attr &  $\rho_{\cdot, Y}$ & $\rho_{\cdot, S}$ & $\lambda$  \\
    \hline
    Age & $0.09$ & $0.05$ & $0.51$ & Sex & $0.11$ & $0.07$ & $0.27$\\
    Workclass & $0.11$ & $0.14$ & $0.49$ & Score & $0.31$ & $0.27$ & $0.00$ \\
    Relation & $0.41$ & $0.58$ & $0.00$ & Decile & $0.25$ & $0.24$ & $0.21$ \\
    Education & $0.18$ & $0.06$ & $0.00$ & Duration & $0.02$ & $0.30$ & $0.52$\\
    
    \hline
  \end{tabular}
  \vskip -1em
\end{table}
From the result, we can observe 
\begin{itemize}[leftmargin=*]
    \item {\method} tends to assign higher weight to features that have high correlation with $S$ but small correlation with $Y$. For example, the correlation of ``Duration'' with label is 0.02 and with $S$ is 0.30, {\method} assigns 0.52 to the feature. This is because such features have little effect on model accuracy but introduce a lot of bias.  Assigning a large weight can help achieve fairness with marginal affects on performance;
    \item On the contrary, when a feature $f_j$ has high correlation with $Y$, {\method} tends to assign smaller number to $\lambda_j$ even if the correlation of the feature with $S$ is large. For example, {\method} assigns 0 to ``Relation''. This is because when a feature has high correlation with label, it is important for model prediction. A large weight on fairness regularizer will significantly reduce the accuracy.
\end{itemize}

These observations further demonstrate that by learning $\boldsymbol \lambda$, {\method} can balance the accuracy and fairness.

\subsection{Flexibility of {\method} to Various Backbones}
In the above experiment, we fix the base classifier as MLP. In this section, we investigate if {\method} can also benefit various classifiers to achieve fairness while maintaining high accuracy when the sensitive attributes are unknown. Specifically, we also adopt two other widely-used classifiers as the base classifiers of FairRF, i.e., Linear Regression (LR) and Support Vector Machine (SVM). \textit{The details of experimental setting and results are given in Supplementary Material}. For both models, we find that FairRF only scarifies a little bit of accuracy while significantly improves the fairness. For example, by adding {\method} to LR , $\Delta_{EO}$ drops by 58.5\% while the accuracy only drops by 2\%.

\section{Conclusion} \label{sec:conclusion}
In this paper, we study a novel and challenging problem of exploring related features for learning fair and accurate classifiers without knowing the sensitive attribute of each data sample. We propose a new framework {\method} which utilizes the related features as pseudo sensitive attribute to regularize the model prediction. Our theoretical analysis shows that if the related features are highly correlated with the sensitive attribute, by minimizing the correlation between the related features and model's prediction, we can learn a fair classifier with respect to the sensitive attribute. Since we lack the prior knowledge of the importance of each related feature, we design a mechanism for the model to automatically learn the importance weight of each feature to trade-off their contribution on classification accuracy and fairness.
Experiments on real-world datasets show that the proposed approach is able to achieve more fair performance compared to existing approaches while maintain high classification accuracy when no sensitive attributes are known.

\section{Acknowledgement}
This material is based upon work supported by, or in part by, the National Science Foundation under grants number IIS-1909702 and IIS-1955851. The findings and conclusions in this paper do not necessarily reflect the view of the funding agency.

\bibliographystyle{ACM-Reference-Format}
\balance
\bibliography{acmart}

\newpage 
\appendix

\section{Training Algorithm}

With the updating rules introduced in Section~\ref{sec:optimization_algorithm}, the full pipeline of the training algorithm for {\method} can be summarized in Algorithm~\ref{alg:Framwork}. Before adding the regularization, we first pre-train the model to converge at a good start point in line $3$ in order to prevent correlation constraint from providing noisy signals. Then, from line $5$ to line $13$, we fine-tune the model to be fair w.r.t related features. If not refining related weights, $\boldsymbol{\lambda}$ will stay fixed. Otherwise, it will be updated iteratively with parameter $\theta$, as shown from line $9$ to $12$.

\begin{algorithm}[t]
  \caption{Training Algorithm of {\method}}
  \label{alg:Framwork}
  \begin{algorithmic}[1] 
  \REQUIRE 
    $\mathbf{X} \in \mathbb{R}^{n \times m}, \mathbf{y} \in \mathbb{R}^{n \times 1}, \mathcal{F}_S$
  \ENSURE $\text{classifier parameters } \boldsymbol{\theta}$
    \STATE Randomly initialize $\boldsymbol{\theta}$; Initialize all entries in $\boldsymbol\lambda$ as $\frac{1}{K}$;
    \FOR{batch in $(X, Y)$}
    \STATE Update $\boldsymbol{\theta}$ based on classification loss of current batch;
    \ENDFOR
    
    \WHILE {Not Converged}
    \FOR{step in MODEL\_TRAIN\_STEP}
    \STATE Update $\boldsymbol{\theta}$ based on Equation~\ref{eq:obj_1};
    \ENDFOR
    \IF{Require learning weight}
    \STATE Obtain $\mathcal{R}_{j}$ for each related feature $\mathbf{x}^j$;
    \STATE Calculate $v$ and  $\boldsymbol{\lambda}$ based on Eq.(\ref{eq:opti_v}) to Eq.(\ref{eq:opti_lambda});
    \ENDIF 
    \ENDWHILE
    
    \RETURN Trained classifier $\theta$.
  \end{algorithmic}
\end{algorithm}

\section{Implementation on Different Base Model}

In the experiments of main paper, we fix the base classifier as MLP. In this section, we present the incorporation of {\method} into various machine learning models to achieve fairness while maintain high accuracy when the sensitive attributes are unknown. Specifically, in addition to MLP, we also adopt two other widely-used classifiers as the base classifiers of {\method}, i.e., Linear Regression (LR) and Support Vector Machine (SVM). We implement both of them in a gradient-based manner. so that parameters can be optimized alternatively with the regularization term on related features, as in Algorithm~\ref{alg:Framwork}.

\begin{table}[h]
  \setlength{\tabcolsep}{4.5pt}
  \caption{Effectiveness of {\method} with various base classifiers on ADULT dataset. }
  \label{tab:baseArc}
  \vskip -1em
  \begin{tabular}{| p{1.8cm} | p{1.6cm} p{1.6cm}  p{1.6cm} | }
    \hline
     Method &    ACC & $\Delta_{EO}$    &   $\Delta_{DP}$ \\ \hline
    \hline
    LR & $0.832\pm0.004$ & $0.053\pm0.003$ & $0.125\pm0.005$  \\
    {\method}(LR) & $0.815\pm0.008$ & $0.022\pm0.009$ & $0.072\pm0.014$ \\ \hline
    SVM & $0.775\pm0.013$ & $0.083\pm0.008$ & $0.117\pm0.013$  \\
    {\method}(SVM) & $0.775\pm0.015$ & $0.031\pm0.017$ & $0.056\pm0.024$ \\ \hline
    MLP & $0.856\pm0.001$ & $0.046\pm0.006$ & $0.089\pm0.005$ \\
    {\method}(MLP) & $0.832\pm0.001$ & $0.025\pm0.009$ & $0.066\pm0.004$  \\
    \hline
  \end{tabular}
  
    \caption{Evaluate effectiveness of {\method} on different base classifiers on COMPAS. }
  \label{tab:baseArc_compas}
  \vskip -1em
  \begin{tabular}{| p{1.8cm} | p{1.6cm} p{1.6cm}  p{1.6cm} | }
    \hline
     Method &    ACC & $\Delta_{EO}$    &   $\Delta_{DP}$ \\ \hline
    \hline
    LR & $0.678\pm0.002$ & $0.215\pm0.033$ & $0.198\pm0.026$  \\
    {\method}(LR) & $0.671\pm0.001$ & $0.201\pm0.011$ & $0.146\pm0.008$ \\ \hline
    SVM & $0.664\pm0.013$ & $0.241\pm0.006$ & $0.151\pm0.008$  \\
    {\method}(SVM) & $0.661\pm0.008$ & $0.162\pm0.008$ & $0.134\pm0.013$ \\ \hline
    MLP & $0.681\pm0.004$ & $0.242\pm0.021$ & $0.171\pm0.015$ \\
    {\method}(MLP) & $0.661\pm0.009$ & $0.166\pm0.022$ & $0.143\pm0.021$  \\
    \hline
  \end{tabular}
  \vskip -1em
\end{table}

Concretely, we tune the hyperparameters on the validation set. $\eta$ is fixed to $0.4$, and $\beta$ is set to $0.4$ and $0.6$ for LR and SVM, respectively. Adam optimizer is adopted to train them, with the initial learning rate as $0.001$. Each experiment is conducted for $5$ times, and average results on ADULT and COMPAS are reported in Table~\ref{tab:baseArc} and~\ref{tab:baseArc_compas}, respectively.

From the table, we observe that 
\begin{itemize}[leftmargin=*]
    \item Compared with the base classifiers, integrating {\method} makes the accuracy drops a little bit, which is in consistent with observations in other work on fair models~\cite{yan2020fair} as the fairness regularizer usually drops the accuracy. However, the accuracy decrease is marginal. For example, for LR, the accuracy only drops by 2\%, which shows that we are still able to maintain high accuracy;
    \item Though the accuracy drops a little bit, the fairness in terms of $\Delta_{EO}$ and $\Delta_{DP}$ on three models improves significantly, even though the sensitive attributes are not observed. For instance, for LR, with the {\method} framework, $\Delta_{EO}$ drops by 58.5\% while the accuracy only drops by 2\%. In other words, we scarify a little bit of accuracy while significantly improves the fairness. 
\end{itemize}

These observations show that {\method} can benefit various machine learning models to achieve fairness while maintaining high accuracy when the sensitive attributes are unknown

\end{document}